\DeclareMathOperator*{\argmax}{arg\,max}
\DeclareMathOperator*{\argmin}{arg\,min}
\DeclareSymbolFont{matha}{OML}{txmi}{m}{it}
\DeclareMathSymbol{\varv}{\mathord}{matha}{118}
\renewcommand{\Re}{\mathbb{R}}
\let\emptyset\varnothing
\newtheorem{proposition}{Proposition}
\begin{document}

\title{HCOA*: Hierarchical Class-ordered A* \\for Navigation in Semantic Environments}

\author{Evangelos Psomiadis and Panagiotis Tsiotras
\thanks{Manuscript received March 31, 2025; Revised June 23, 2025; Accepted August 03, 2025.}
\thanks{This paper was recommended for publication by Editor Aniket Bera upon evaluation of the Associate Editor and Reviewers' comments.
This work was supported by ARL award DCIST CRA W911NF-17-2-0181 and ONR award N00014-23-1-2304.}
\thanks{E. Psomiadis, and P. Tsiotras are with the D. Guggenheim School of Aerospace Engineering, Georgia Institute of Technology, Atlanta, GA, 30332-0150, USA. Email:
{\tt\footnotesize \{epsomiadis3,tsiotras\}@gatech.edu}}%
\thanks{Digital Object Identifier (DOI): see top of this page.}
}

\markboth{IEEE Robotics and Automation Letters. Preprint Version. Accepted August, 2025}
{Psomiadis \MakeLowercase{\textit{et al.}}: HCOA*: Hierarchical Class-ordered A* for Navigation in Semantic Environments} 

\maketitle

\thispagestyle{firstpage}

\begin{abstract}
This paper addresses the problem of robot navigation in mixed geometric/semantic 3D environments. 
Given a hierarchical representation of the environment, the objective is to navigate from a start position to a goal, while satisfying task-specific safety constraints and minimizing computational cost. 
We introduce Hierarchical Class-ordered A* (HCOA*), an algorithm that leverages the environment's hierarchy for efficient and safe path-planning in mixed geometric/semantic graphs.
We use a total order over the semantic classes and prove theoretical performance guarantees for the algorithm.
We propose three approaches for higher-layer node classification based on the semantics of the lowest layer: a Graph Neural Network method, a k-Nearest Neighbors method, and a Majority-Class method. 
We evaluate HCOA* in simulations on two 3D Scene Graphs, comparing it to the state-of-the-art and assessing the performance of each classification approach.
Results show that HCOA* reduces the computational time of navigation by up to 50\%, while maintaining near-optimal performance across a wide range of scenarios.
\end{abstract}

\begin{IEEEkeywords}
Autonomous Vehicle Navigation, Motion and Path Planning, AI-Enabled Robotics.
\end{IEEEkeywords}

\IEEEpeerreviewmaketitle

\section{Introduction}
\IEEEPARstart{A}{s} robotic sensing technologies advance, enabling robots to perceive vast and diverse information, two fundamental questions arise:
\textit{What information from this extensive data stream is most important for a given task? and, second, how can the robot effectively utilize this information for decision-making?}
Hierarchical semantic environment representations, such as 3D Scene Graphs (3DSGs) \cite{armeni_iccv19, hughes2022hydra, hughes2024foundations}, provide rich and structured abstractions that mirror human-like reasoning, thus facilitating the selection and organization of information.

Previous research in hierarchical path-planning has primarily addressed the first question \cite{botea2004, Fernandez1998,  Warren1993, kremer2023snav}. 
In \cite{botea2004} the authors introduce Hierarchical Path-Finding A*, a hierarchical A* \cite{Hart1968_Astar} variant for grid-based maps.
Their approach partitions the map into clusters with designated entrance points, which are used for high-level path-planning.
The authors in \cite{Fernandez1998} propose a hierarchical graph search algorithm for graphs with edge weights represented as intervals.
In \cite{kremer2023snav}, the authors introduce S-Nav, a hierarchical path-planning algorithm for navigation in Situational Graphs, a multi-layer graph representation of indoor environments built on top of 3DSGs.

Semantic path-planning has focused on the second question above by incorporating semantics into the decision-making process.
In \cite{serdel2023_smana}, the authors propose SMaNa, a semantic-aware mapping and navigation framework that employs A* with edge weights computed using node distances and semantic labels. 
To avoid reliance on manually crafted, user-defined cost functions, several works adopted the concept of total order over semantic classes.
In \cite{Wooden2006}, a weighted function combining edge cost and semantic class is introduced to determine optimal paths while respecting a total semantic order. 
However, this approach requires global graph properties, making it computationally expensive. 
To address this limitation, the authors in \cite{Lim2020} propose Class-ordered A* (COA*), an extension of traditional A* that incorporates semantic information, based on a total order. 
The algorithm is shown to be both complete and optimal under the defined total order and is validated in various planning scenarios. 
This work was later extended to dynamic environments in \cite{Lim2021COLPA*}, where LPA* \cite{KOENIG200493} was adapted for online replanning in dynamic maps with semantic information.

Building on the ideas introduced in \cite{Fernandez1998} and \cite{Lim2020}, this paper proposes a hierarchical class-ordered path-planning algorithm that uses a total order on semantic classes. 
The notion of total order is applicable to a wide range of scenarios, including navigation in partially known environments \cite{Lim2020} and safety-critical applications. 
In this work, we focus on encoding a set of safety constraints as relationships between semantic categories, which are implicitly enforced during path-planning.

\begin{figure}[tb]
    \centering        
    {\includegraphics[width=1\linewidth]{./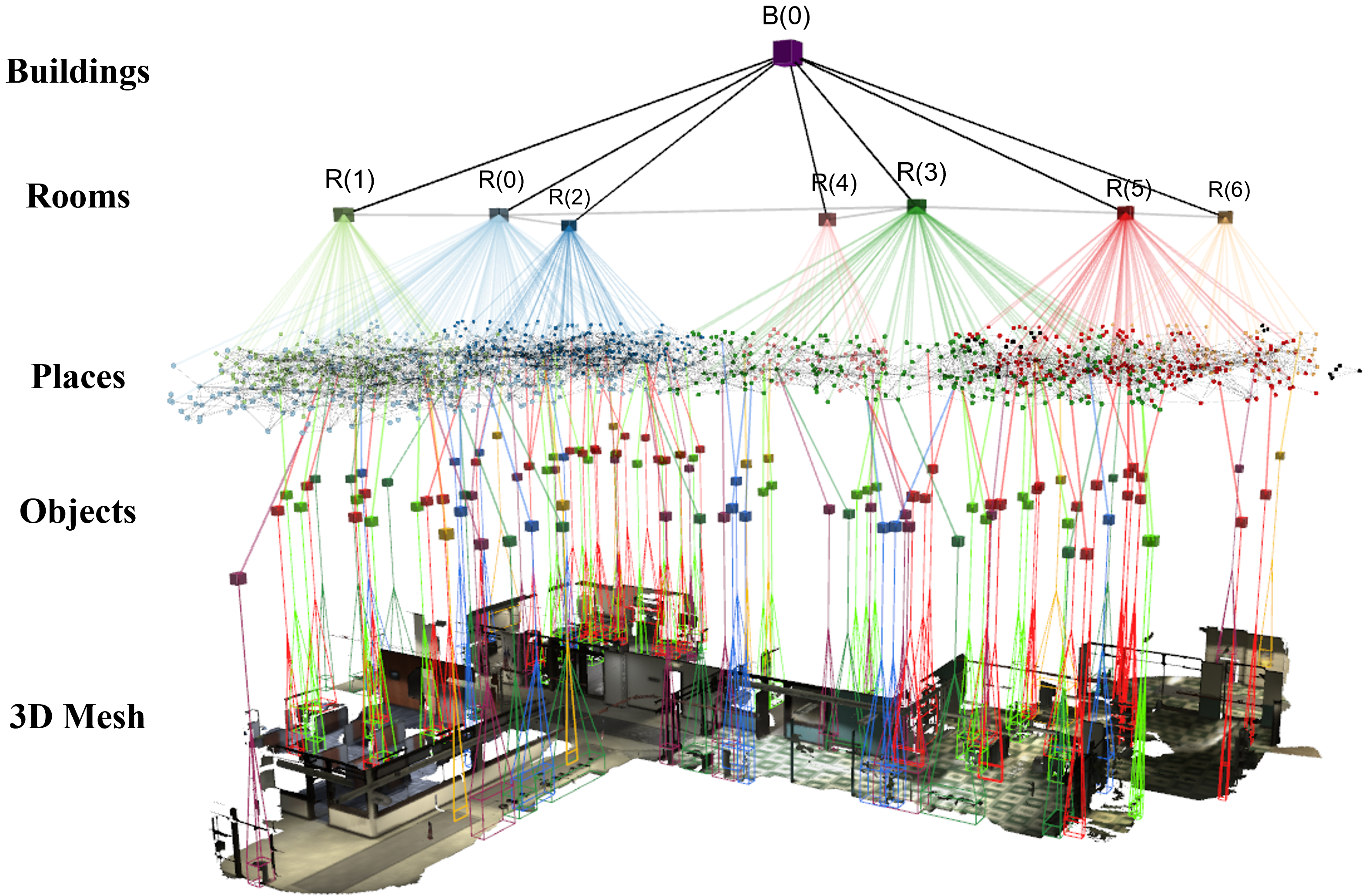}}
    \caption{3D Scene Graph generated from the uHumans2 office scene dataset \cite{Rosinol21ijrr-Kimera} using Hydra \cite{hughes2022hydra}. The graph comprises five layers, as shown in the figure. Room nodes are denoted as R($\cdot$), while building nodes are represented as B($\cdot$).} 
    \label{fig:3dsg}
\end{figure}

Navigation within hierarchical semantic environments has also been explored in prior work. 
In \cite{Ngom2024_Intellimove}, the authors present IntelliMove, a hierarchical semantic map framework along with a semantic path-planner. 
The edge weights in the graph encode spatial relationships, such as distances between rooms or objects, and Dijkstra’s algorithm is used to compute optimal paths.
Furthermore,~\cite{ray2024tamp} addresses Task and Motion Planning in 3DSGs using a three-level hierarchical planner consisting of a discrete task planner, a navigation planner for path-planning, and a low-level planner. 
Their approach focuses on optimizing task planning to reduce computational cost, while path-planning operates on the unpruned 3DSG.

Learning-based methods have also been explored to facilitate task execution in a 3DSG. 
In~\cite{Kurenkov2020SemanticAG}, the authors propose HMS, a neural network-based approach for object search in a 3DSG, leveraging the 3DSG's hierarchy. 
In \cite{Talak2021NeuralTree} and \cite{hughes2024foundations}, the authors propose the Neural Tree, a Graph Neural Network (GNN) architecture designed for node classification. 
Although effective in classifying higher-layer nodes in 3DSGs, this approach requires a tree decomposition of the input graph, which increases computational time.

\textbf{Main Contributions:}
Our approach integrates task semantics directly into the path-planning process within a unified algorithm. 
By leveraging the environment's hierarchy, we reduce computational resource demands while ensuring efficient and safe navigation. 
Our key contributions are as follows: 
\begin{enumerate} 
    \item We introduce Hierarchical Class-ordered A*, a novel hierarchical semantic path-planning algorithm for navigation in hierarchical semantic environments.
    \item We propose three methods for node classification on the higher layers of the environment's hierarchy: a Majority-Class (MC) method, a k-Nearest Neighbors (kNN) method, and a GNN method.
    \item We validate our approach on two publicly available 3DSG datasets, demonstrating its effectiveness in real-world scenarios.\footnote{The code is available at https://github.com/epsomiadis3/HCOA-star.}
\end{enumerate}

\section{Preliminaries} \label{sec:preliminaries}

We model the environment as a hierarchical semantic graph (HSG).
Specifically, let $G = (V,E,\mathcal{K})$ be a graph with $n$ layers, where $V$ is the set of nodes, $E \subseteq V \times V$ is the set of edges, and $\mathcal{K} = {1, \ldots K}$ represents a set of semantic classes ordered in decreasing priority.
This priority is characterized by task-specific requirements (e.g., avoid certain areas or objects, or prefer known over unknown regions).
We denote the set of layers as $L$, where $\ell=0$ is the lowest abstraction (e.g., sensor) layer and $\ell=n$ is the highest abstraction layer (root). 
Each layer $\ell$ forms a connected weighted subgraph $G^\ell = (V^\ell,E^\ell, \mathcal{K}) \subset G$, with an associated weight function $w^\ell: E^\ell \to \Re^+$, which corresponds to Euclidean distance in this work.
We assume that each node in layer $\ell-1$, for $\ell = 1, \ldots n$, is connected to a single node in layer $\ell$, which we refer to as its parent node.
More generally, we define an ancestor as a node's parent or any higher-layer predecessor in the hierarchy.
We define the projection function $p: V \times L \to V$ that maps each node to its corresponding ancestor in layer $\ell$.

Each node in $\ell = 0$ is assigned a semantic class based on the given perception data and the specific task using the function $\phi_V^0: V^0 \to \mathcal{K}$. 
We extend the labeling function $\phi_V^0$ to layers $\ell \neq 0$.
Details on the computation of $\phi_V^\ell$ for $\ell \neq 0$ are given in 
Section~\ref{sec:classification}.
Additionally, we define the edge classification function $\phi_E^\ell: E^\ell \to \mathcal{K}$ by $\phi_E^\ell(e) = \max(\phi_V^\ell(v), \phi_V^\ell(u))$ for $e = (v, u)$, thereby overestimating the edge class.

The environment model reflects the structure of 3DSGs, as shown in Figure~\ref{fig:3dsg}. 
For an overview of the advantages of this perceptual model and its construction, we refer the interested reader to \cite{hughes2022hydra}.
In our framework, in addition to the spatial layers, we also utilize the object layer of the 3DSG to encode safety constraints and to infer the nodes' semantic classes, enabling more context-aware and task-relevant navigation.

Let $\Pi(v_s^\ell, v_g^\ell)$ denote the set of all acyclic paths in $G^\ell$ from $v_s^\ell \in V^\ell$ to $v_g^\ell \in V^\ell$, and let $\Pi_k$ be the subset of all paths in $\Pi$ for which the least favorable (i.e., highest) edge class is exactly $k$.  
Formally,
\begin{equation}\label{eq:Pi_k_set}
\Pi_k = \{\pi \in \Pi : N(\pi,k) > 0; \: N(\pi,k') = 0, \forall k'>k\},
\end{equation}
where $N(\pi,k) = |\{e \in \pi: \phi_E^\ell(e) = k\}|$ is the number of edges of class $k$ in path $\pi$.
We further define $\Pi_k^i \subseteq \Pi_k$ consisting of all paths in $\Pi_k$ that contain exactly $i$ edges of class $k$, that is $\Pi_k^i = \{\pi \in \Pi_k: N(\pi,k) = i\}$.
To enable a consistent comparison of paths across different classes, we impose a total order such that $k<\ell \Rightarrow \Pi_k^i \prec \Pi_\ell^j$ $\text{for all } i,j$ and $i<j \Rightarrow \Pi_k^i \prec \Pi_k^j$.
This order ensures that any two paths with the same start and goal nodes can be compared.

\begin{algorithm}[tb]
\caption{Hierarchical Class-ordered A* (HCOA*)}\label{alg:HCOAstar}
\hspace*{\algorithmicindent} \textbf{Input:} {$G$, $v_s$, $v_g$, $h(\cdot)$} \\
\hspace*{\algorithmicindent} \textbf{Output:} {$\pi^0$}
\begin{algorithmic}[1]

    \ForAll{$ \ell = \ell_{n-1}, \ldots, \ell_0$}
        \State $v_s^\ell \gets p(v_s,\ell);\hspace{0.5em} v_g^\ell \gets p(v_g,\ell)$
        \State \( g(v_s^\ell) \gets 0;\hspace{0.5em} \theta(v_s^\ell) \gets 0 \cdot \mathbf{1}_K;\hspace{0.5em} f(v_s^\ell) \gets h(v_s^\ell) \)
        \State \( g(v^\ell) \gets \infty;\hspace{0.5em} \theta(v^\ell) \gets \infty \cdot \mathbf{1}_K, \quad \forall v^\ell \in G^{\ell'} \setminus \{v_s^\ell\}\)
        \State PredecessorMap \( \gets \emptyset \) \Comment{Track path reconstruction}
        \State \( Q \gets \{ v_s^\ell \} \) \Comment{Initialize priority queue}
        
        \While{Q is not empty}
            \State \( v \gets\)\textsc{PopNode}($Q$);\hspace{0.5em}$Q \gets Q \setminus \{v\}$
            \If{ \( v = v_g^\ell \) }
                \State $\pi^{\ell} \gets$ \textsc{Path}(PredecessorMap, \( v_g \))
                \State \textsc{Break}
            \EndIf
            
            \ForAll{$ u \in \text{neighbors}(v, G^{\ell'}) $}
                \State $\theta(v,u) \gets $ \textsc{Semantics}($\phi_V^\ell(v), G^{0'}(u) $)
                \If{ \big(\( \theta(v) + \theta(v,u) \prec \theta(u) \)\big) \textbf{or} \\ \hspace{4.0em} \big($\theta(v) + \theta(v,u) = \theta(u)$ \textbf{and} \\ \hspace{4.4em} $g(v) + w^\ell(v,u) < g(u)$\big)}
                    \State \( Q \gets Q \cup \{u\} \)
                    \State PredecessorMap[\( u \)] \( \gets v \)
                    \State $\theta(u) \gets \theta(v) + \theta(v,u)$
                    \State  $g(u) \gets g(v) + w^\ell(v,u)$
                    \State $f(u) \gets g(u) + h(u)$
                \EndIf
            \EndFor
        \EndWhile

        \ForAll{$ \hat\ell <\ell$}
            \State $G^{\hat\ell'} \gets \{v \in G^{\hat\ell'}: p(v,\ell) \in \pi^\ell\}$
        \EndFor

    \EndFor
    \State \Return $\pi^0$
\end{algorithmic}
\end{algorithm}

\section{Problem Formulation}

Consider a mobile robot tasked with navigating a complex 3D environment where certain regions have lower traversal priority.
The robot is provided with a HSG of the environment (e.g., 3DSG), as outlined in Section \ref{sec:preliminaries}, where the semantic classes of the nodes in layer $\ell = 0$ are assigned based on perception, and tailored to the specific task (e.g., safety constraints).
Let $v_s \in V^0$ and $v_g\in V^0$ denote the robot’s starting and goal nodes, respectively.
The objective is to determine the shortest path in $\ell =0$ while minimizing traversal through the least favorable edges. 
Formally,
\begin{subequations}\label{eq:optimal_path}
    \begin{eqnarray}
         & \pi^*(v_s,v_g) = \argmin\limits_{\pi \in \Pi^*(v_s,v_g)}{\sum\limits_{e \in \pi}w^0(e)}, \label{eq:optimal_path_a}\\
         & \Pi^* = \min\limits_{i \in \mathbb{N}} \min\limits_{k \in \mathcal{K}} \Pi_k^i, \label{eq:optimal_path_b}
    \end{eqnarray}
\end{subequations}
where $\Pi^*$ is the set of paths obtained by minimizing over all possible classes $k$ and number of least favorable class $i$.

However, due to computational constraints, the robot seeks to avoid a full graph search over the entire $G^0$, as its structure is both large and semantically diverse, potentially rendering the search intractable.

\subsection{Problem Statement}
We propose a hierarchical semantic path-planning algorithm for efficient robot navigation in large-scale environments. 
The algorithm operates top-down across the layers of the HSG, iteratively computing the optimal semantic path at each layer while pruning nodes that are not included in the path.
Additionally, we introduce three methods for node classification to predict the semantic classes of higher-layer nodes: a Majority-Class, a kNN, and a GNN method.

\section{Path-Planning}

\subsection{Hierarchical Class-ordered A*}
\label{HCOA*}

We introduce Hierarchical Class-ordered A* (HCOA*), a hierarchical semantic path-planning algorithm for HSGs.
The algorithm initiates planning at layer $\ell = n - 1$, and recursively proceeds to lower layers. 
At each layer, it prunes the HSG based on the computed path from the previous layer and applies the same planning algorithm to the pruned HSG. 
Within each layer, the algorithm utilizes Class-ordered A* (COA*) \cite{Lim2020}, which finds the shortest path while minimizing the number of least favorable edges through lexicographic comparison.
By leveraging the hierarchy, the algorithm performs multiple searches on smaller subgraphs rather than a potentially computationally expensive search over $G^0$.

HCOA* is presented in Algorithm \ref{alg:HCOAstar}.
We use the function $h: V \to \Re$ to denote an admissible heuristic function, similar to standard A*.
Lines 2–6 initialize the variables, where the function $p(v,\ell)$ returns the ancestor of node $v$ in layer $\ell$.
Lines 7–24 execute COA*, which will be detailed in Section \ref{COA*}.
Notably, Line 14 computes the semantic class of the edge $(v,u)$ based on the semantic classes of nodes $v$ and $u$.
If $\ell = 0$, the nodes' semantics are determined from perception data and the task, whereas for $\ell \neq 0$, they are predicted using the methods described in Section \ref{sec:classification}.
Additionally, $G^{0'}(u)$ is the induced subgraph of node $u \in V^\ell$ on layer $0$ given by $G^{0'}(u) = \{u' \in  V^0: p(u', \ell) = u, \: \text{where} \: u \in V^\ell  \}$, 
while $G^{\ell'}$ is the pruned graph at layer $\ell$ (Line 27). 
Lines 25–27 refine the HSG by pruning nodes that do not share an ancestor with the paths in the higher layers.

\subsection{Class-ordered A*}
\label{COA*}

In this section, we present an overview of the simplified COA* implementation used in HCOA* to help the reader understand the simplifications made to the baseline algorithm.
Further details, along with proofs of the COA* optimality and completeness, 
can be found in~\cite{Lim2020}.

Consider a single-layer, weighted, semantic graph $G^\ell$. 
We characterize each node $v$ by the triple $(q, g, \theta_V)$,
where $q$ is the predecessor node (saved in the PredecessorMap in Algorithm~\ref{alg:HCOAstar}), $g: V^\ell \to \Re^+$ is the cost-to-come function, and $\theta_V: V^\ell \to \mathbb{N}^K$ specifies the number of edges of each semantic class along the path up to node $v$.
We extend $\theta_V$ to single edges by defining $\theta_E: E^\ell \to \mathbb{N}^K$ as an one-hot vector with all elements zero except at $\phi_E$. 
The subscript is omitted in Algorithm \ref{alg:HCOAstar}.
The function $\textsc{PopNode}$ selects the next node to expand, and $\textsc{Path}$ constructs the optimal path by backtracking through the predecessor nodes from the goal. 

\begin{figure}[tb]
     \centering
     \begin{subfigure}[b]{0.49\linewidth}
         \centering
         \includegraphics[width=\textwidth]{./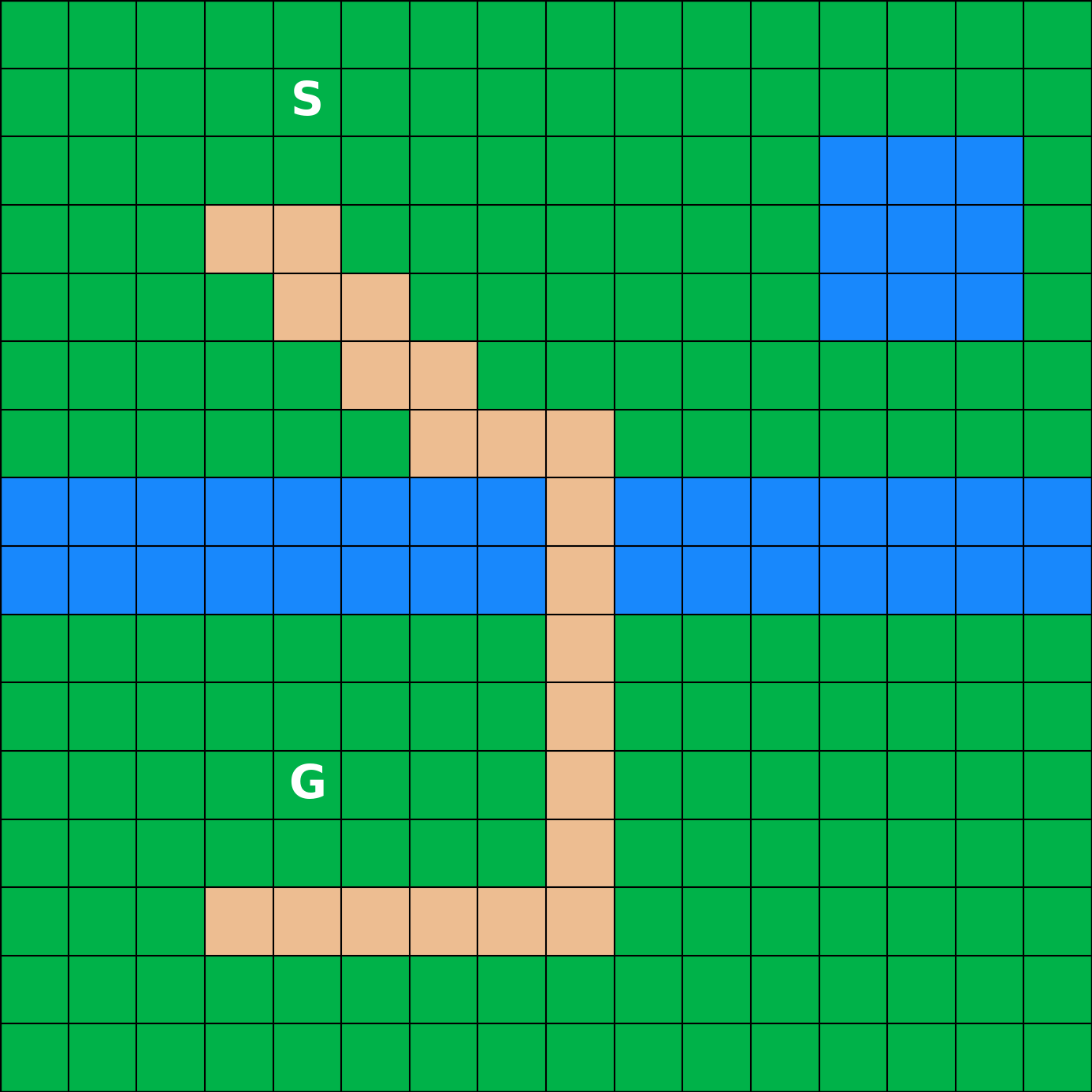}
         \caption{}
         \label{fig:ex_environ}
     \end{subfigure}
     \begin{subfigure}[b]{0.49\linewidth}
         \centering
         \includegraphics[width=\textwidth]{./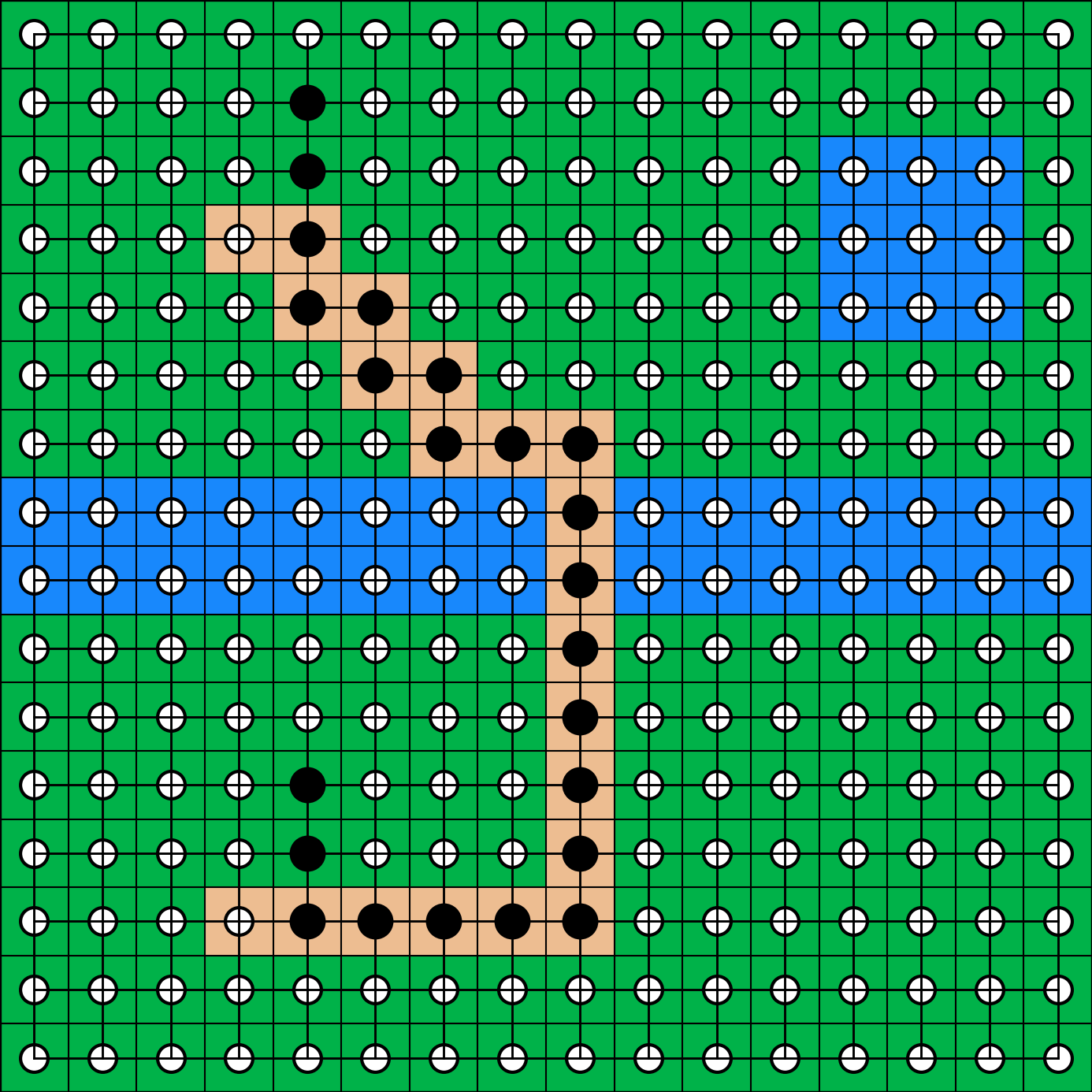}
         \caption{}
         \label{fig:ex_graph}
     \end{subfigure}
        \caption{(a) Grid world with three semantic classes: road, grass, river. \(\textbf{S}\) is the starting cell while \(\textbf{G}\) is the target; (b) Graph of the environment and path produced by COA*.} 
        \label{fig:example}
\end{figure}

Figure~\ref{fig:example} presents an example of COA* in a grid world, where the robot must navigate from $\mathbf{S}$ to $\mathbf{G}$ while satisfying safety constraints prioritized in descending order: (i) avoid water, and (ii) avoid grass. 
To enforce these constraints, we assign semantic labels to nodes via  $\phi_V^\ell$.

\subsection{Performance Guarantees}

The following propositions establish the algorithm's completeness along with sufficient conditions for  optimality.
For both propositions, the assumptions for completeness and optimality of COA* hold \cite{Lim2020}.
We conclude the section with a discussion on the algorithm’s computational complexity.

\begin{proposition}
    Let $G$ be an HSG and let nodes $v_s, v_g \in V^0$.  
    HCOA* is \textit{complete}, meaning that it is guaranteed to find a path $\pi^0 = \pi(v_s,v_g)$, whenever one exists.
\end{proposition}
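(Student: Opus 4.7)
The plan is to combine the completeness of COA*, established in \cite{Lim2020}, with a downward induction on the layers of the HSG, arguing that a valid path from $p(v_s,\ell)$ to $p(v_g,\ell)$ persists in the pruned subgraph $G^{\ell'}$ at every level. Since Lines 7--24 of Algorithm~\ref{alg:HCOAstar} implement COA* on $G^{\ell'}$, the completeness of COA* immediately gives that HCOA* returns a layer-$\ell$ path whenever one exists in $G^{\ell'}$ between the relevant projected endpoints. The whole task then reduces to showing existence of such a path at each layer the algorithm visits.

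I would set up the induction from $\ell = n-1$ down to $\ell = 0$. For the base case no pruning has occurred: take any $v_s$-to-$v_g$ path in $G^0$ (which exists by hypothesis) and project it through the ancestor map $p(\cdot, n-1)$; by the bottom-up construction of the HSG, consecutive projected nodes either coincide or are joined by an edge in $E^{n-1}$, yielding a valid path from $p(v_s,n-1)$ to $p(v_g,n-1)$. For the inductive step, suppose COA* returns some path $\pi^\ell = (u_0,\ldots,u_p)$ at layer $\ell$. Lines 25--27 restrict $G^{(\ell-1)'}$ to descendants of the $u_i$'s, and I would argue that each edge $(u_i,u_{i+1}) \in E^\ell$ reflects a concrete adjacency in $E^{\ell-1}$ between some descendant of $u_i$ and some descendant of $u_{i+1}$. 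Using the stipulated connectivity of each $G^{\ell-1}$, intra-cluster subpaths may be concatenated with these inter-cluster bridges to produce a path in $G^{(\ell-1)'}$ from $p(v_s,\ell-1)$ to $p(v_g,\ell-1)$. Iterating down to $\ell=0$ yields the desired $v_s$-to-$v_g$ path, which the final COA* call recovers.

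The main obstacle will be justifying the inductive step, specifically that pruning driven by $\pi^\ell$ never severs the lower-layer connectivity needed to reach $p(v_g,\ell-1)$ from $p(v_s,\ell-1)$. This relies on the HSG's structural property, implicit in the paper's environment model and standard for 3DSGs, that each higher-layer edge genuinely corresponds to a traversable bridge between the descendant subgraphs, together with intra-cluster connectivity. A subtle point is that COA* at layer $\ell$ need not return the upward projection of the eventual optimal $G^0$ path, so the argument cannot merely project one fixed $G^0$ path upward; it must handle whichever path COA* happens to select at each level. If the required bridge-preservation property is not already implied by the HSG definition, it would need to be elevated to an explicit precondition under which the completeness claim holds.
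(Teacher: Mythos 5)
Your proposal is correct and takes essentially the same route as the paper: both reduce completeness to (i) COA*'s completeness on each layer and (ii) the claim that pruning by $\pi^\ell$ cannot disconnect $p(v_s,\ell-1)$ from $p(v_g,\ell-1)$, with the paper phrasing step (ii) as a short \emph{ad absurdum} argument (``if the pruned path set were empty, $\pi^{\ell*}$ would be disconnected'') that is just the contrapositive of your direct inductive construction. Your version is more explicit about the structural property doing the work --- that each layer-$\ell$ edge corresponds to a traversable bridge between internally connected descendant clusters --- which the paper assumes implicitly via the HSG definition rather than stating as a precondition.
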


\begin{proof}
Let $\pi^0$ exist. 
The structure of $G$ ensures that for each layer $\ell \in L$, there exists a corresponding path $\pi^\ell = \pi(v_s^\ell, v_g^\ell)$, where $v_s^\ell = p(v_s, \ell)$ and $v_g^\ell = p(v_g, \ell)$.  
At each subgraph $G^\ell$, COA* is guaranteed to find the optimal path $\pi^{\ell*}$, if one exists \cite{Lim2020}.  
Suppose, ad absurdum, that the graph pruning performed in Lines 25–27 results in $\Pi(v_s^{\ell-1}, v_g^{\ell-1}) = \emptyset$, preventing COA* from finding a solution in layer $\ell-1$. 
This implies that $\pi^{\ell*}$ is disconnected, contradicting COA*'s completeness.  
\end{proof}

\begin{proposition}\label{prop:optimal}
Let $G$ be an HSG with nodes $v_s, v_g \in V^0$.  
Suppose that in layer $\ell = 1$, the acyclic path $\pi^1(v_s^1, v_g^1)$ is unique, and that for all $u_s, u_g \in V^0$ sharing the same parent, $P = p(u_s, 1) = p(u_g, 1)$, every node $u$ along the optimal path $\pi^{*}(u_s, u_g)$ satisfies $p(u,1) = P$.  
Then, HCOA* is guaranteed to find the optimal path $\pi^*(v_s, v_g)$.
\end{proposition}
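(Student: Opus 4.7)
The plan is to show that the pruning performed after the layer-$1$ iteration preserves every node of the true optimum $\pi^{*}(v_{s},v_{g})$, so that the final COA* call on the pruned $G^{0'}$ recovers $\pi^{*}$. By the completeness and optimality of COA* from \cite{Lim2020}, HCOA* on $G^{1}$ returns some acyclic optimal path under the lexicographic order; by the uniqueness hypothesis this path must be exactly $\pi^{1}(v_{s}^{1},v_{g}^{1})$. When $n>2$, an analogous hypothesis is implicitly needed at each intermediate layer and the same argument applies inductively; the critical step is the passage from layer $1$ to layer $0$.

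Next, I would project the true optimum $\pi^{*}=(v_{s}=u_{0},u_{1},\ldots,u_{m}=v_{g})$ onto $G^{1}$ via $p(\cdot,1)$, obtaining the sequence $(P_{0},\ldots,P_{m})$ with $P_{i}=p(u_{i},1)$. Collapsing consecutive repetitions yields a walk $W=(Q_{0},\ldots,Q_{r})$ in $G^{1}$ from $v_{s}^{1}$ to $v_{g}^{1}$, where consecutive distinct parents are adjacent in $G^{1}$ by the HSG convention that layer-$1$ edges are induced by layer-$0$ edges crossing between distinct parent clusters.

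The main step is showing $W$ is acyclic, which I would do by contradiction. Suppose $Q_{a}=Q_{b}=P$ with $a<b$ and $Q_{k}\neq P$ for $a<k<b$; choose the last index $i$ whose parent is $P$ before departure and the first index $j>i$ whose parent is $P$ upon re-entry. By optimal substructure under the total order defined in \eqref{eq:optimal_path}, the sub-path $\pi^{*}(u_{i},u_{j})$ is itself optimal between $u_{i}$ and $u_{j}$; the second hypothesis then forces every node on $\pi^{*}(u_{i},u_{j})$ to have parent $P$, contradicting the existence of intermediate $Q_{k}\neq P$. Hence $W$ is an acyclic $v_{s}^{1}$-to-$v_{g}^{1}$ path in $G^{1}$, and the uniqueness hypothesis yields $W=\pi^{1}$.

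Consequently, every $p(u_{i},1)$ lies on $\pi^{1}$, so the pruning in Lines 25--27 retains each $u_{i}$ in $G^{0'}$ and $\pi^{*}$ remains feasible in the pruned graph. Applying COA* to $G^{0'}$ at the final iteration returns an optimal path on $G^{0'}$; since $\pi^{*}$ is globally optimal on $G^{0}\supseteq G^{0'}$ and feasible in $G^{0'}$, it is the returned path. The main obstacle is the third step: it requires a clean invocation of optimal substructure under the lexicographic class order (not merely scalar cost) together with a precise choice of endpoints $u_{i},u_{j}$ to bring the second hypothesis to bear, and it implicitly uses that consecutive distinct parents are adjacent in $G^{1}$, which the HSG definition in Section~\ref{sec:preliminaries} leaves tacit.
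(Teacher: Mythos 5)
Your proof is correct and follows essentially the same route as the paper's: project $\pi^*$ onto layer $1$, use the second hypothesis to rule out the projection leaving and re-entering the same parent (the paper's ``cycle'' case), and use uniqueness of $\pi^1$ to rule out any detour through other parents (the paper's second case), so that pruning preserves $\pi^*$. The only difference is presentational --- you argue directly that the projection equals $\pi^1$ whereas the paper assumes a pruned segment and derives a contradiction --- and you make explicit two points the paper leaves tacit (optimal substructure under the lexicographic order, and adjacency in $G^1$ of consecutive distinct parents).
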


\begin{proof}
Define $G^{0'} \subseteq G^0$ as the pruned graph after $n$ iterations of HCOA*.
Suppose, ad absurdum, that a segment of the optimal path $\pi^*(v_s, v_g)$ has been removed, i.e., there exists a sub-path $\widehat{\pi} \subseteq \pi^*(v_s, v_g)$ such that $\widehat{\pi} \not\subset G^{0'}$.
Two cases arise: 
(i) The parents of $\widehat{\pi}$ in layer $\ell = 1$ form a cycle that starts and ends at a parent of a node in $\pi^*(v_s, v_g) \cap G^{0'}$. 
This contradicts the second assumption, which states that the nodes of the optimal path at $\ell=0$, where the starting and goal node share the same parent, must all have the same parent. 
(ii) The parents of $\widehat{\pi}$ in layer $\ell = 1$ are part of a sub-path that starts at the parent of one node in $\pi^*(v_s, v_g) \cap G^{0'}$ and ends at a different parent of another node in $\pi^*(v_s, v_g) \cap G^{0'}$.
This contradicts the first assumption, as it implies the existence of two distinct paths in layer $\ell = 1$, violating uniqueness.
\end{proof}

The two assumptions of Proposition~\ref{prop:optimal} may seem restrictive at first, but they often hold in indoor environments. 
Figure~\ref{fig:prop2} shows an example, where $\mathbf{G}$ denotes the goal, $R(\cdot)$ denotes a room, and $C_1$, $C_2$ represent specific conditions, namely $C_1$: open door and $C_2$: dangerous area. 
When both $C_1$ and $C_2$ are false, Proposition~\ref{prop:optimal} holds, and HCOA* successfully finds the optimal path (shown in green).
However, when $C_1$ is true, the path produced by HCOA* (still shown in green) becomes suboptimal. 
This occurs because the room is represented, in this example, by its center node.
For large rooms such as corridors (e.g., $R(1)$), this representation becomes misleading.
Additionally, if $C_2$ is true, the optimal path requires moving from $R(2)$ to $R(1)$ and then back to $R(2)$. 
This cycle cannot be captured by COA* operating at the room layer, as it computes acyclic paths. 
Hence, HCOA* produces a suboptimal path in the places layer in this case (shown in blue).

In general, the spatial representation of nodes in higher layers, as well as their computed semantic classes, can significantly affect HCOA*'s performance, potentially leading to suboptimal solutions. 
Nevertheless, as demonstrated in Section~\ref{sec:simulations}, HCOA* manages to find the optimal path in most tested scenarios.
One potential improvement is to refine the representation of rooms by associating multiple nodes with each room, for example, using nodes near doors rather than using a single node at the geometric center of the room.

\begin{figure}[tb]
    \centering        
    {\includegraphics[width=1\linewidth]{./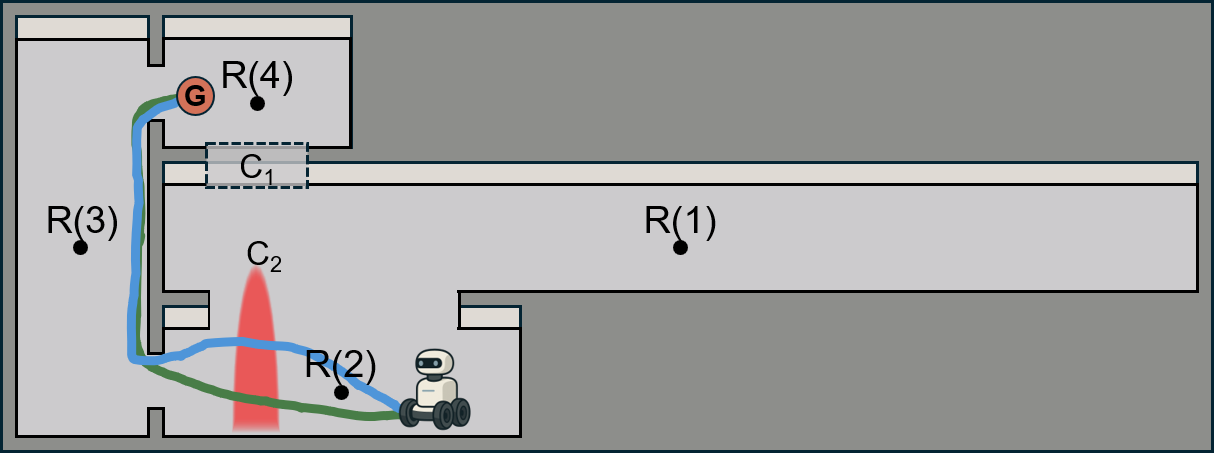}}
    \caption{Example demonstrating Proposition~\ref{prop:optimal}. 
    $\mathbf{G}$ denotes the goal location, $R(\cdot)$ indicates rooms, and $C_1$, $C_2$ represent specific conditions: $C_1$: open door, and $C_2$: dangerous area.} 
    \label{fig:prop2}
\end{figure}

\begin{figure*}[!t]
    \centering   
    {\includegraphics[width=1\linewidth]{./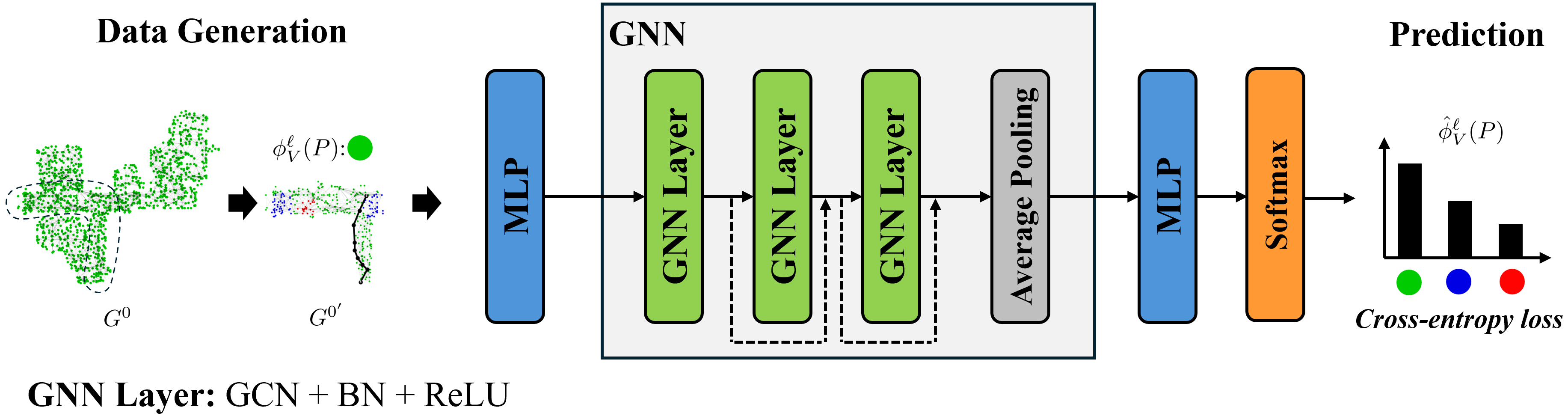}}

    \caption{Proposed GNN architecture for node $P \in V^\ell$ ($\ell \neq 0$) classification, utilizing the semantic classes (green, blue, red) of nodes in $\ell=0$. The dataset $\mathcal{D} = \{G^{0'}(P),\phi_V^\ell(P)\}$ is generated by running COA* on induced subgraphs of nodes. The network consists of two MLPs for pre-processing and post-processing, three GNN layers with skip connections, and an average pooling operator. Training is conducted using cross-entropy loss over the semantic classes.}
    \label{fig:GNN}
\end{figure*}

Regarding the computational complexity of the algorithm, note that, in the worst case, COA* with an admissible heuristic has the same complexity as Dijkstra’s algorithm, which is $\mathcal{O}(|E^0| + |V^0| \log |V^0|)$ when implemented with a Fibonacci heap. 
Hence, the worst-case complexity of HCOA* is given by $\mathcal{O}\left(\sum_{\ell=0}^{L} \left( |E^{\ell'}| + |V^{\ell'}| \log |V^{\ell'}| \right)\right)$, where $|V^\ell{}'|$ and $|E^\ell{}'|$ denote the number of vertices and edges, respectively, of the pruned graph at each layer $\ell$.

\section{Semantic Class Prediction}
\label{sec:classification}

Let $\phi_V^\ell: V^\ell \to \mathcal{K}$ be the function that assigns semantic classes to nodes in layers $\ell =1,\ldots,n$.
Ideally, given an optimal path $\pi^*(v_s,v_g)$ computed in $G^0$, the semantic class of a node $P \in V^\ell$ ($\ell = 1,\ldots,n$) is determined conservatively by selecting the highest semantic class present in the nodes in $\ell =0$ that both belong to the optimal path and are part of the subgraph $G^{0'}(P) = \{u \in  V^0: p(u, \ell) = P, \: \text{where} \: P \in V^\ell \}$ induced by $P$ on layer $0$. 
Formally,
\begin{subequations}\label{eq:phi_prime}
    \begin{align}
         & \phi_V^\ell(P) = \max\limits_{u \in  \pi'(P)}{\phi_V^{0}(u)}, \label{eq:phi_prime_a}\\
         & \pi'(P) = \pi^*(v_s,v_g) \cap G^{0'}(P). \label{eq:phi_prime_b}
    \end{align}
\end{subequations}
This process follows a bottom-up approach, necessitating the computation of the optimal path in layer $\ell=0$.
Given that HCOA* works top-down, we seek alternative methods to find the semantic classes of the higher layers.
We introduce three methods for predicting the semantic class of higher-layer nodes: a Majority-Class (MC), a k-Nearest Neighbors (kNN), and a Graph Neural Network (GNN) approach. 
The latter two methods require a supervised training phase, which necessitates the construction of a dataset.

To this end, we construct a dataset $\mathcal{D} = \{G^{0'}(P),\phi_V^\ell(P)\} $ by selecting nodes $P \in V^\ell$ for $\ell \neq 0$ and extracting their induced subgraphs $G^{0'}(P)$. 
We then assign semantic classes to the nodes in $G^{0'}(P)$ based on the given task. 
For navigation tasks involving object avoidance, we determine region priorities by placing disks of a specified radius and semantic class around certain objects. 
Next, we run COA* on $G^{0'}(P)$ and compute $\phi_V^\ell(P)$, where the start is a border node.

A border node $v^* \in V^0$ is a node that is connected by an edge to another (border) node $u^* \in V^0$ whose ancestor in layer $\ell$ is different from that of $v^*$.
Formally, there exists $e = (v^*, u^*)$ where $v^*, u^* \in V^0$ and $p(v^*, \ell) \neq p(u^*, \ell)$.
Border nodes play a crucial role in the classification of $P \in V^\ell$. 
If the start and goal nodes do not share the same ancestor, the optimal path must traverse a border node in layer $\ell$. 
Conversely, if both nodes have $P$ as their ancestor, the classification of $P$ becomes irrelevant, as HCOA* will not need to compute a path in layer $\ell$.

\subsection{Majority-Class Method}
\label{MC}

The Majority-Class (MC) method is a simple and fast approach for predicting the semantic class of a node $P \in V^\ell$ for $\ell \neq 0$. 
The predicted class is determined by counting the occurrences of each semantic class in the induced subgraph $G^{0'}(P)$ and selecting the most frequent one.
That is,
\begin{equation}\label{eq:MC_approach}
    \widehat{\phi}_V^\ell(P) = \argmax\limits_{k \in \mathcal{K}}{N_V(G^{0'}(P),k)},
\end{equation}
where $N_V(G,k) = |\{v \in G: \phi_V^0(v) = k\}|$ is the number of nodes of class $k$ in $G$.

\subsection{k-Nearest Neighbors Method}
\label{kNN}

The kNN algorithm \cite{Cover1967_kNN} is a non-parametric, instance-based learning method that classifies data points based on the majority vote of their $\mathsf{k}$ nearest neighbors in the feature space.
Given the induced subgraph $G^{0'}(P)$ with node set $V^{0'}$, we extract a graph-level feature vector $\mathsf{f}$ consisting of: (i) the proportion of each semantic class in the graph
\begin{equation}
\frac{N_V(G^{0'}(P),k)}{|V^{0'}|}, \quad k \in \mathcal{K},
\end{equation}
and (ii) the majority class among the border nodes $\argmax_{k \in \mathcal{K}}{N_{V^*}(G^{0'}(P),k)}$, where $V^*$ denotes the set of border nodes and $k$ the index of a semantic class (not to be confused with $\mathsf{k}$ used in kNN).
We intentionally limit the number of features, as the inference time of kNN increases with the dimensionality of the feature space.

To ensure uniform scaling, we standardize all feature vectors $\mathsf{f}$. 
Graph similarity is measured via Euclidean distance, and classification is performed by assigning the majority class among the $\mathsf{k}$ nearest neighbors from the training set:
\begin{equation}\label{eq:kNN_approach}
    \widehat{\phi}_V^\ell(P) = \argmax\limits_{k \in \mathcal{K}} \sum\limits_{i=1}^{\mathsf{k}} \mathbbm{1}[\phi_V^\ell(P_i) = k],
\end{equation}
where $P_1, P_2, \ldots, P_{\mathsf{k}}$ are the $\mathsf{k}$ nearest neighbors of $P$ in the training set, and $\mathbbm{1}[\cdot]$ is the indicator function that equals 1 if the condition is true and 0 otherwise.

\subsection{Graph Neural Network Method} 
\label{GNN}

The GNN method is a supervised learning approach that leverages the graph structure and features through message passing.
The proposed model is shown in Figure~\ref{fig:GNN}. 
We use border nodes as input features, which are concatenated with the semantic classes of the nodes in $G^{0'}$ represented as one-hot encodings.
A 2-layer MLP is used to preprocess the input.

The GNN consists of three layers, each incorporating a Graph Convolutional Network (GCN) operator \cite{kipf2017semi} for message passing, followed by batch normalization and a ReLU activation. 
We adopt a 3-layer architecture, as deeper GNNs tend to suffer from oversmoothing and slower inference, both of which are critical limitations in path-planning applications.
To mitigate oversmoothing, we introduce skip connections between the GNN layers. 
Additionally, we apply average pooling to handle environments of varying sizes.
The pooled representation is then processed through another 2-layer MLP, followed by a softmax function. 
Finally, we use cross-entropy as the loss function for node classification. 
Formally, given a dataset $\mathcal{D}$ and learnable parameters $\vartheta_{\text{GNN}}, \vartheta_{\text{MLP}}$, the predicted semantic class is obtained by
\begin{equation}\label{eq:ce_loss}
\min\limits_{\vartheta_{\text{GNN}}, \vartheta_{\text{MLP}}} {\sum\limits_{(G^{0'}(P),\phi_V^\ell(P))\in \mathcal{D}} {\mathcal{L}_{\text{CE}} \big(\widehat{\phi}_V^\ell(P), \phi_V^\ell(P) \big)} },
\end{equation}
where $\widehat{\phi}_V^\ell(P) = f(G^{0'}(P); \vartheta_{\text{GNN}}, \vartheta_{\text{MLP}})$ represents the model function parametrized by $\vartheta_{\text{GNN}}, \vartheta_{\text{MLP}}$.

\section{Simulations}\label{sec:simulations}

We perform simulations on two publicly available datasets generated with Hydra \cite{hughes2022hydra}: the uHumans2 office 3DSG (Figure~\ref{fig:3dsg}) and the subway 3DSG (Figure~\ref{fig:subway}(\subref{fig:subway_3dsg})).
Figure~\ref{fig:3dsg} also shows the layer names.
The \textsf{place} layer ($\ell=0$) is a graph  with nodes as obstacle-free locations and edges indicating traversability.
The \textsf{room} layer ($\ell=1$) consists of nodes representing room centers and edges connecting neighboring rooms.

Our task involves navigation with object or room avoidance.  
The pipeline proceeds as follows: we first translate safety-related commands (e.g., ``avoid computers") into semantic classes at the \textsf{place} layer, and impose a total order based on their priority.  
Next, depending on the room classification method employed, we train the corresponding models to predict semantic room classes.  
Finally, we select the best-performing model and execute HCOA* to compute the path.

In all simulations, we impose two safety constraints resulting in three semantic classes $\mathcal{K}$, ordered by decreasing priority.
We leverage the \textsf{room} and \textsf{object} layers to identify relevant rooms and objects, and assign semantic labels to nearby \textsf{place} nodes accordingly.
The resulting classes are visualized (in decreasing priority) as: 1 (Green), 2 (Blue), and 3 (Red).

The first section presents the comparison of the three methods for semantic class prediction.
The second section showcases path-planning scenarios in the office 3DSG, comparing HCOA* to COA*. 
We use the same weight function across all graph layers and denote it by $w$ for simplicity, corresponding to the Euclidean distance between nodes.
We also include results from a modified A* algorithm, denoted MA* (similar to~\cite{serdel2023_smana}), to demonstrate the importance of imposing a total order over semantic classes. 
In this variant, the edge cost in A* is modified to $w' = w + \alpha^k$, where $k \in \mathcal{K}$ and $\alpha$ is a scaling factor.  
The third section presents analogous experiments in the subway 3DSG, also comparing HCOA* to COA* and MA*.
In these experiments, HCOA*-GNN utilizes the best GNN model from the previous section for room inference.

All the simulations were performed using Python 3.8.10 and PyTorch 2.4.1 on a computer with 2.2 GHz, 12-Core, Intel Core i7-8750H CPU, 16GB RAM and an Nvidia RTX 2060 GPU, 6GB VRAM.

\subsection{Performance Metrics}

To evaluate the performance of the prediction of a room's semantic class, we compute the accuracy, which measures the proportion of correctly classified rooms in each dataset:
\begin{equation}\label{eq:accuracy}
\text{Accuracy} = \frac{\sum\limits_{i = 1}^n \mathbbm{1}[\widehat{\phi}_V^\ell(P_i) = \phi_V^\ell(P_i)]}{n},
\end{equation}
where $n$ is the number of samples.
For the planning scenarios, we compute the algorithms' computational time along with the number of expanded nodes (Line 8 in Algorithm \ref{alg:HCOAstar}). 

\subsection{Semantic Class Prediction}

\begin{table}[!t]
\caption{Semantic Class Prediction.}
\label{tab: results1}
\centering
\small
\begin{tabular}{ |c|c|c|c| } 
    \hline
    {Metrics} &  {MC} &  {kNN} & {GNN}   \\
    \hline
    Training Time & \textbf{-} & 10 s & 97 min  \\
    \hline
    Validation Acc. (\%) & 42.79  & 52.43  &  \textbf{63.43}  \\
    \hline
    Test Acc. (1-20 bn Rooms) (\%) & 57.00 & 70.50 & \textbf{74.00}   \\ 
    \hline
    Test Acc. (21-30 bn Rooms) (\%) & 33.75 & 48.50 & \textbf{58.25}  \\ 
    \hline
    Test Acc. (31-40 bn Rooms) (\%) & 37.00 & 44.25 &  \textbf{54.75} \\ 
    \hline
    Test Acc. (41-50 bn Rooms) (\%) & 41.50 & 43.00 &  \textbf{47.50} \\ 
    \hline
\end{tabular}
\end{table}

In this section, we present the results from predicting the semantic class of \textsf{room} nodes.
To generate the dataset, we constructed 14,000 graphs by extracting the induced subgraphs from all rooms in the Office 3DSG and randomly assigning semantic classes to nodes within a disk of randomly chosen center locations, repeating this process 2,000 times per room.
Then, we ran COA* to determine the semantic class of the room, starting from a randomly selected border node.
The dataset was split into 80\% training, 10\% validation, and 10\% testing. The GNN model was trained using the Adam optimizer \cite{kingma2015adam} with a learning rate of $10^{-2}$, over 1,600 epochs, a dropout rate of 0.2 and a batch size of 64. The GNN and MLP layers contain 32 neurons.
For training the kNN model, we set $\mathsf{k} = 5$, and used half of the training dataset.

Table~\ref{tab: results1} summarizes the results of \textsf{room} node classification. 
For the test set, we analyzed the performance separately for rooms with different numbers of border nodes (bn). 
The results indicate that the GNN approach outperforms the other two methods. 
This result is expected, as the GNN learns complex spatial, structural, and semantic features, unlike the MC baseline and the kNN method, which relies on hand-crafted features.
Furthermore, we observe that all methods achieve higher accuracy for rooms with fewer bns. 
These rooms are typically smaller and have only one entrance, resulting in spatially clustered border nodes. 
In such cases, features considering the border nodes' semantics lead to improved classification performance.
The training accuracy of the GNN is 67.94\%, while that of the kNN is 60.50\%.
Note that the training and validation accuracies reported in Table~\ref{tab: results1} correspond to the full set of rooms, irrespective of the number of border nodes.

\subsection{Path-Planning on uHumans2 Office Scene}
\label{sec:path_planning_office}

\begin{figure}[tb]
     \centering
     \begin{subfigure}[b]{0.49\linewidth}
         \centering
         \includegraphics[width=\textwidth]{./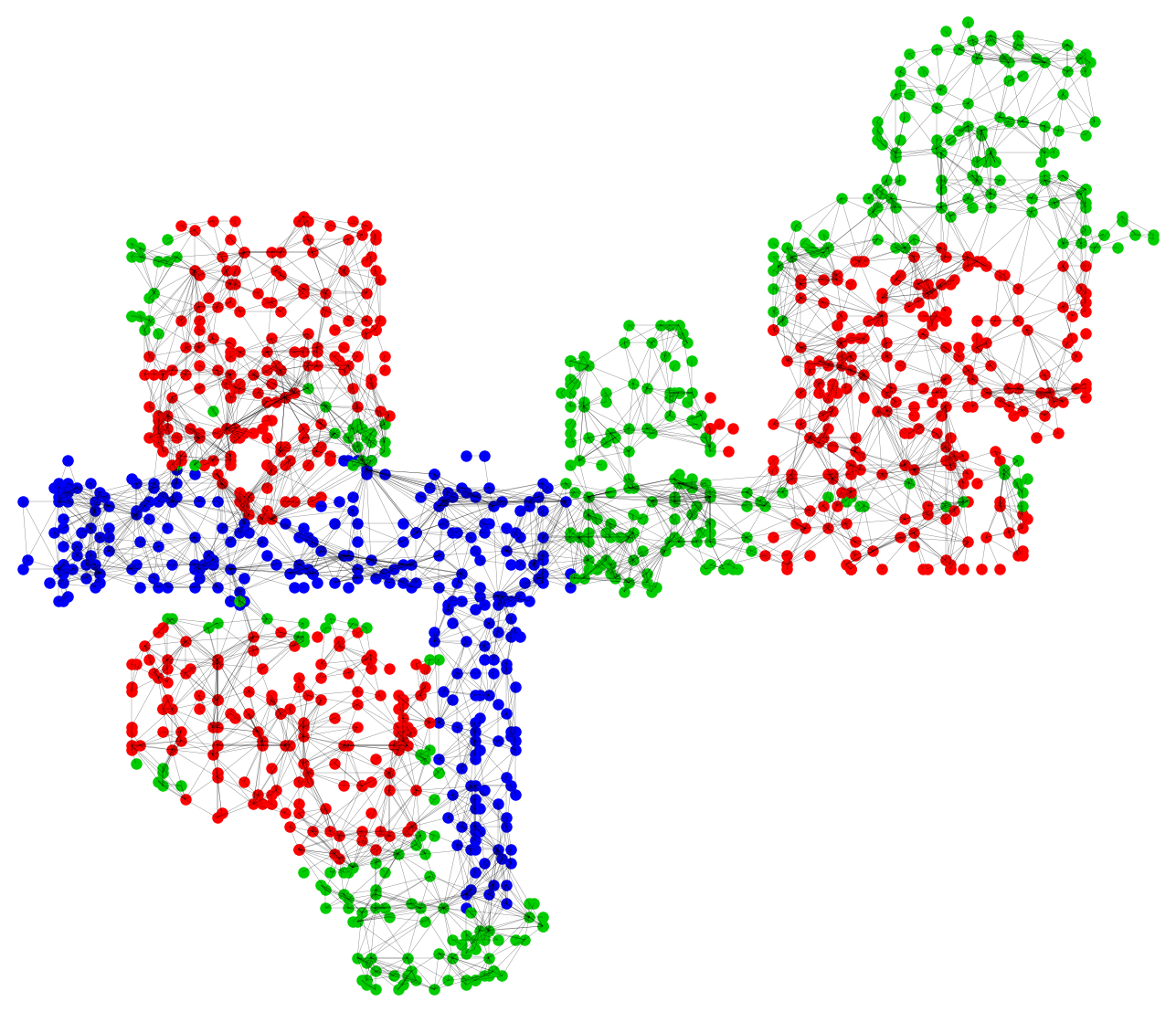}
         \caption{}
         \label{fig:office_places}
     \end{subfigure}
     \begin{subfigure}[b]{0.49\linewidth}
         \centering
         \includegraphics[width=\textwidth]{./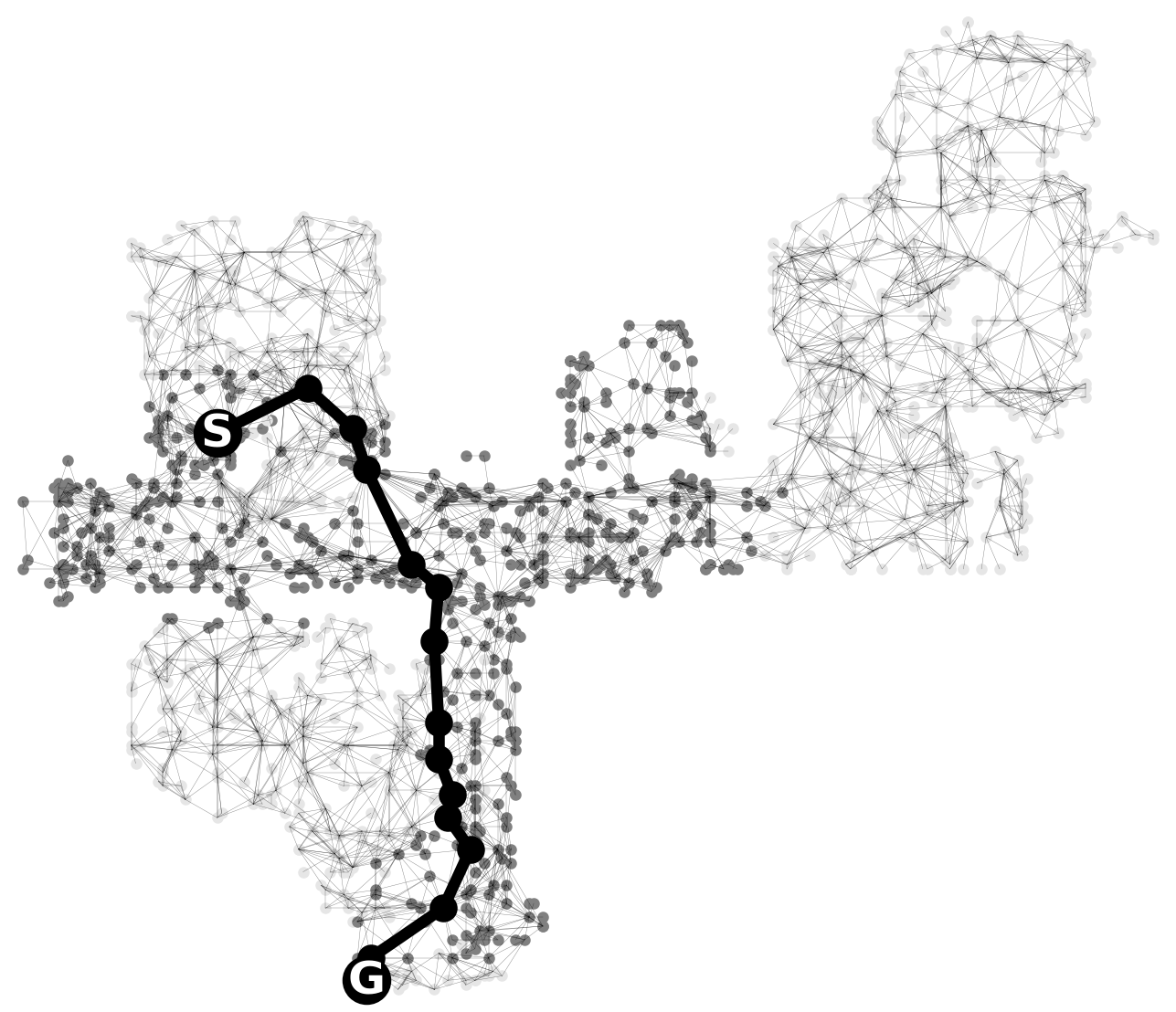}
         \caption{}
         \label{fig:office_COAStar_places}
     \end{subfigure}
     
     \begin{subfigure}[b]{0.49\linewidth}
         \centering
         \includegraphics[width=\textwidth]{./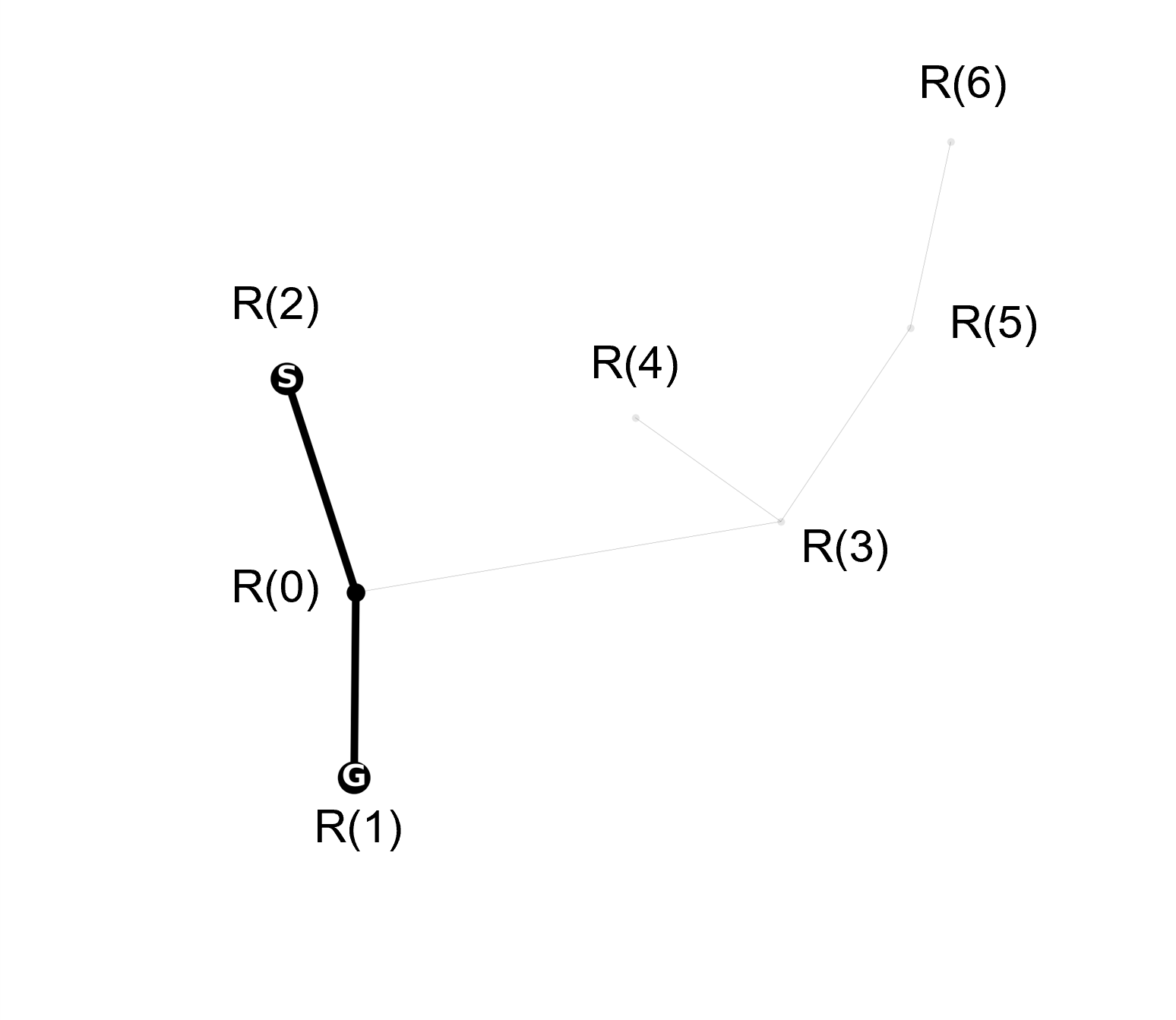}
         \caption{}
         \label{fig:office_HCOAStar_rooms}
     \end{subfigure}
     \begin{subfigure}[b]{0.49\linewidth}
         \centering
         \includegraphics[width=\textwidth]{./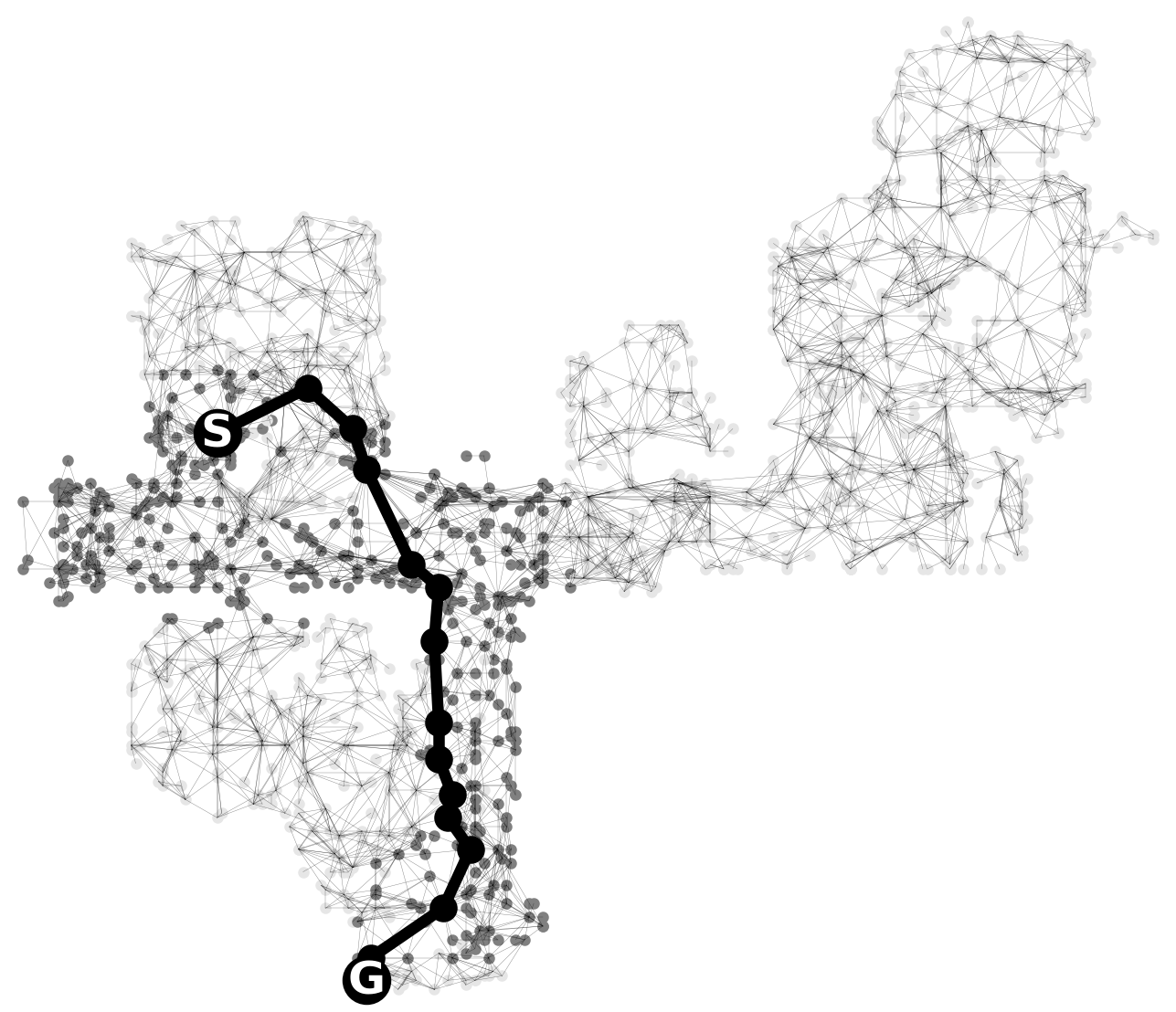}
         \caption{}
         \label{fig:office_HCOAStar_places}
     \end{subfigure}

     \caption{(a) \textsf{Place} subgraph of the uHumans2 office (52m×45m) with its semantic classes (green, blue, red); (b-d) The starting node is denoted by \textbf{S}, and the goal node by \textbf{G}. The path of each algorithm is shown in black. Expanded nodes are depicted in dark gray, while unexpanded nodes in light gray: (b) Path of COA* on the \textsf{place} subgraph; (c) Path of HCOA* on the \textsf{room} subgraph; (d) Path of HCOA* on the \textsf{place} subgraph.} 
     \label{fig:office}
\end{figure}

In this section, we perform multiple path-planning scenarios in the office 3DSG shown in Figure~\ref{fig:3dsg}. 
The robot must navigate from a start location to a goal while satisfying the following safety constraints, prioritized in descending order: (i) avoid computers, and (ii) avoid entering room R(0), a typically crowded area.
To enforce these constraints, we assign semantic labels to \textsf{place} nodes. 
Specifically, we set $\phi_V^0(v) = 3$ for all $v \in C$, where $C$ denotes the set of \textsf{place} nodes within a radius $r = 3$m of any computer in the scene. 
Formally, $C = \{ v \in G^0: \| x(v) - x(o)\|_2 \leq r ,\hspace{0.5em}  \forall o \in O\}$
where $x(\cdot)$ denotes the location of a \textsf{place} node $v$ or \textsf{object} $o$, and $O$ is the set of all computers in the 3DSG.
We also assign $\phi_V^0(v) = 2$ for all $v \in \text{R(0)}$, thereby penalizing passing through this room.
The rest of the nodes have $\phi_V^0(v) = 1$.

First, we evaluate the computational efficiency of the algorithm by performing 1,000 runs with fixed start and goal. 
The starting node $v_s$ is located in room R(2), and the goal $v_g$ is in room R(1). 
Figure~\ref{fig:office}(\subref{fig:office_places}) shows the \textsf{place} layer, where nodes are color-coded according to their semantic class.
The computed path of COA* in the \textsf{place} layer is shown in Figure~\ref{fig:office}(\subref{fig:office_COAStar_places}). 
Figures~\ref{fig:office}(\subref{fig:office_HCOAStar_rooms})-(\subref{fig:office_HCOAStar_places}) illustrate the results of HCOA* in the \textsf{room} and \textsf{place} layers. 
The figures also show the expanded nodes for each algorithm. 
In particular, COA* expanded nodes in two additional rooms compared to HCOA*. 

\begin{table}[!t]
\caption{Path-Planning on uHumans2 Office Scene.}
\label{tab: results2}
\centering
\small
\begin{tabular}{ |c|c|c|c| }
    \hline
    Algorithm & Exp. Nodes & Time ($10^{-3}$s) & Opt. Path \\
    \hline
    HCOA*-MC & \textbf{412} & \textbf{4.2} $\boldsymbol{\pm}$ \textbf{2.5} & \checkmark \\
    \hline
    HCOA*-kNN & \textbf{412} & 5.6 $\pm$ 0.2 & \checkmark \\
    \hline
    HCOA*-GNN & \textbf{412} & 11.5 $\pm$ 2.5 & \checkmark \\
    \hline
    COA* & 549 & 4.9 $\pm$ 0.4 & \checkmark \\
    \hline
\end{tabular}
\end{table}

The exact number of expanded nodes is provided in Table~\ref{tab: results2}, where HCOA* demonstrates a 25\% reduction compared to COA*.
Table~\ref{tab: results2} also presents the computational time of the algorithms. 
We observe that HCOA*-MC achieves the best performance in terms of computational efficiency, reducing the execution time by 14\% compared to COA*. 
However, HCOA*-GNN has the highest computational time, as GNN inference can be more time-consuming than graph search in small graphs (the entire \textsf{place} subgraph contains only 1,314 nodes). 
Additionally, all classification methods classify R(0) as class 2 and R(2) as class 3.
However, while GNN and kNN assign class 3 to R(3), MC classifies it as class 1. 
Although this misclassification does not impact the final path in the current scenario, it could lead to suboptimal solutions in more complex room graphs, as the robustness of the algorithm is highly dependent on the structure of the room graph. 

Finally, all four approaches successfully compute the optimal path in the \textsf{place} layer, as shown in Figure~\ref{fig:office}. 
It is important to note that while the first assumption of Proposition~\ref{prop:optimal} holds, the second assumption does not. 
However, this does not affect the results in this case, as the optimal path in the \textsf{place} layer does not require deviations into rooms outside the unique path in the \textsf{room} layer to achieve a lower-cost trajectory.

To further evaluate the suboptimality of HCOA*, we conducted 500 scenarios with varied start and goal locations. 
HCOA* successfully found the optimal path in 96.95\% of the cases. 
This outcome is highly dependent on the representation of the \textsf{room} layer; modeling a room solely by its centroid can lead to suboptimal paths, particularly in elongated or non-orthogonal rooms (e.g., R(0)) where the centroid may not accurately reflect spatial connectivity (see
Figure 3).

We also report results from the MA* implementation. 
When $\alpha = 2$, performance drops to 80.38\% (comparing to COA*), while $\alpha = 10$ yields 99.84\%. For larger values, accuracy approaches 100\%, but for $\alpha > 10^5$, it degrades (e.g., 95.25\%), indicating numerical instability. 
These findings show that MA* requires careful tuning of $\alpha$ based on the graph size, Euclidean distances, and the number of semantic classes, highlighting the benefit of using a total order as in COA* and HCOA*.

\subsection{Path-Planning on uHumans2 Subway Scene}

\begin{figure}[tb]
     \centering
     \begin{subfigure}[b]{0.99\linewidth}
         \centering
         \includegraphics[width=\textwidth]{./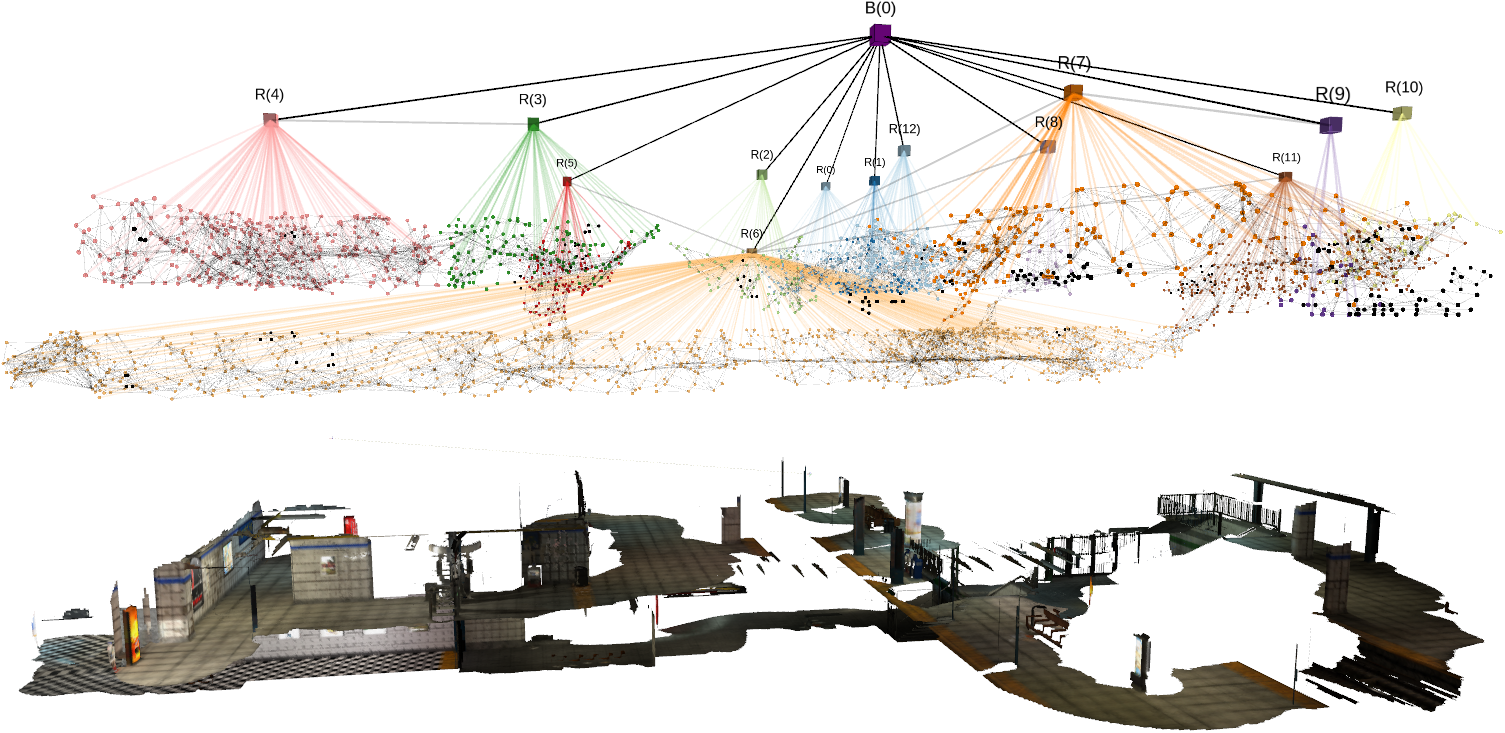}
         \caption{}
         \label{fig:subway_3dsg}
     \end{subfigure}
     
     \begin{subfigure}[b]{0.49\linewidth}
         \centering
         \includegraphics[width=\textwidth]{./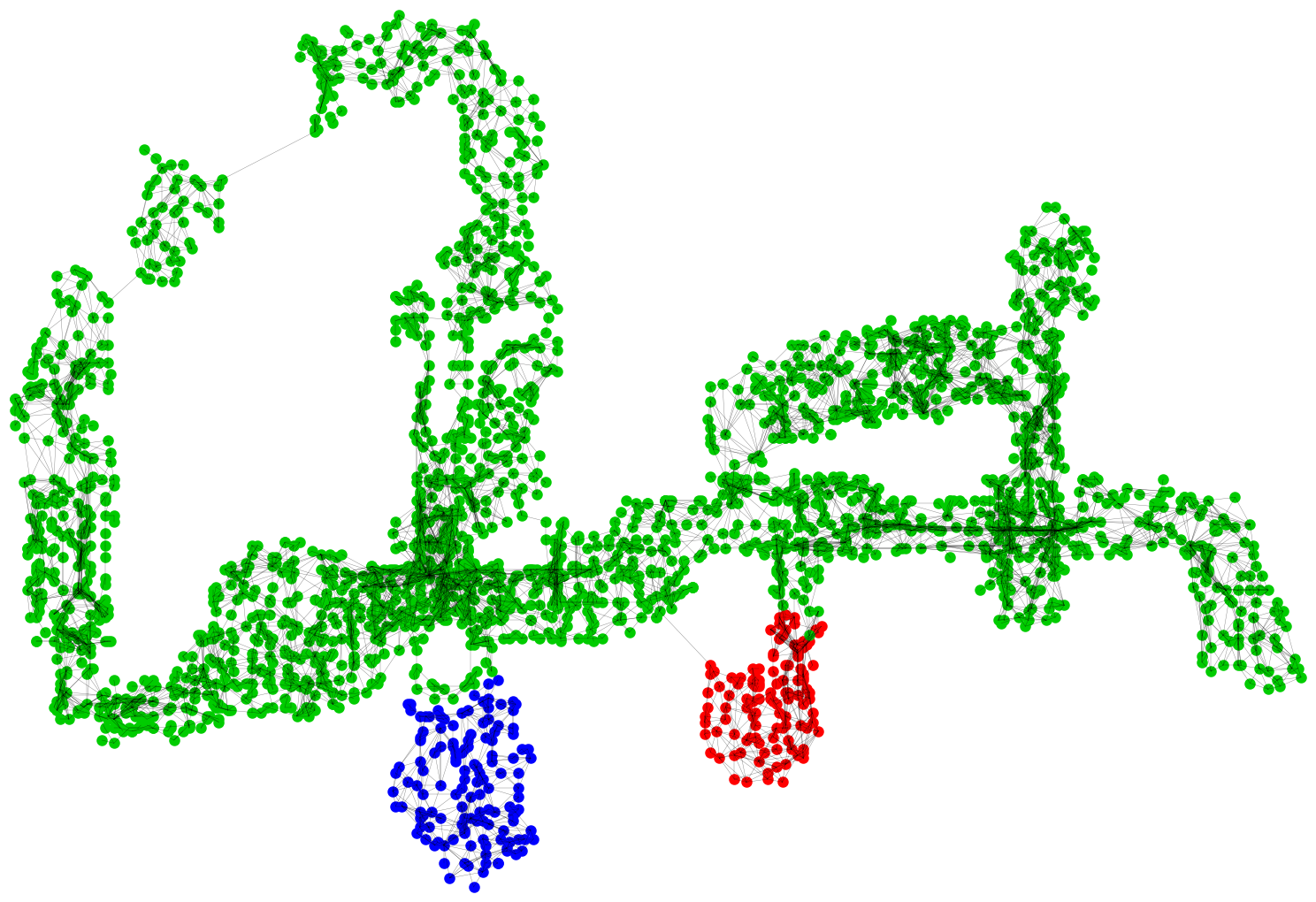}
         \caption{}
         \label{fig:subway_places}
     \end{subfigure}
     \begin{subfigure}[b]{0.49\linewidth}
         \centering
         \includegraphics[width=\textwidth]{./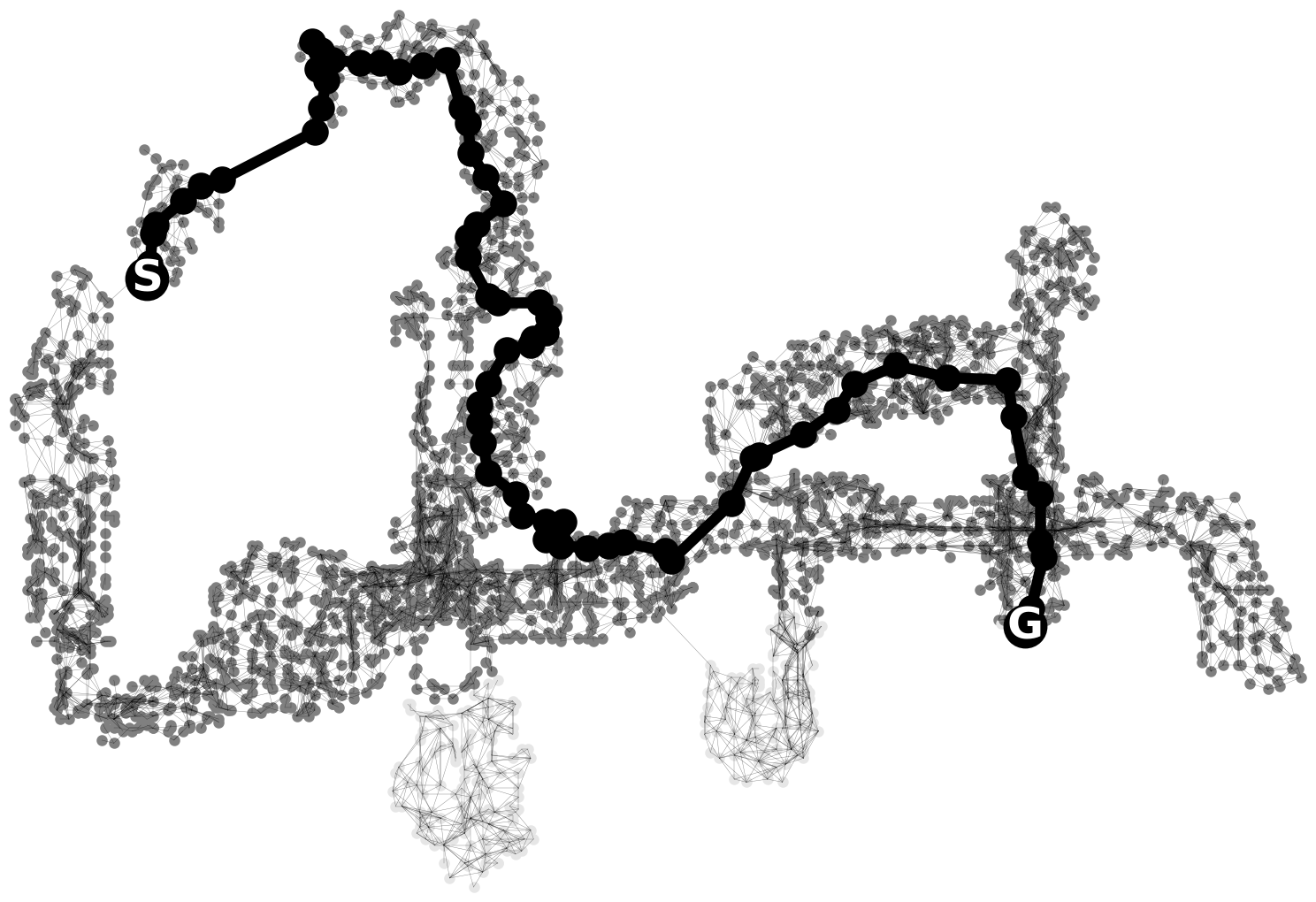}
         \caption{}
         \label{fig:subway_COAStar_places}
     \end{subfigure}
     
     \begin{subfigure}[b]{0.49\linewidth}
         \centering
         \includegraphics[width=\textwidth]{./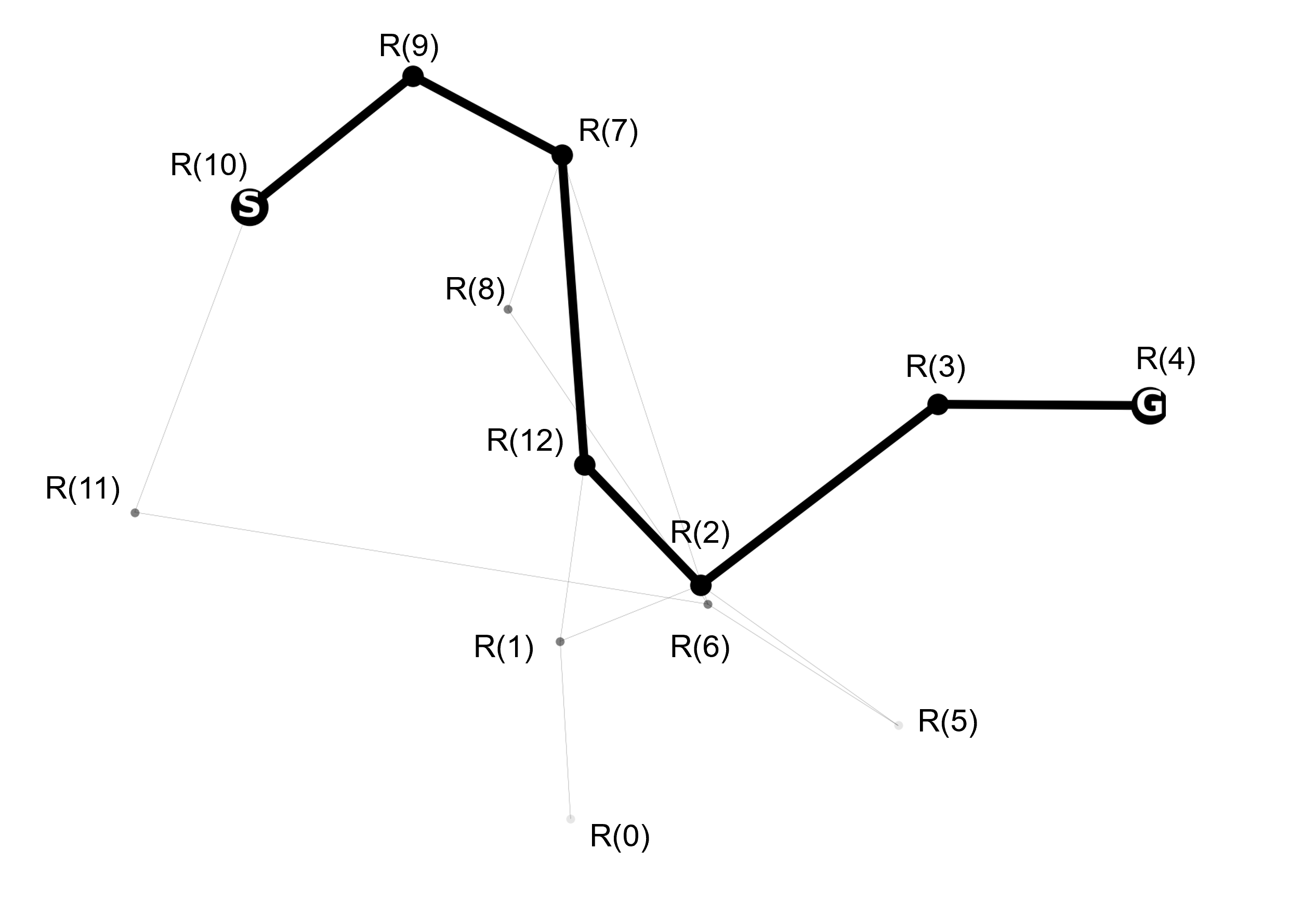}
         \caption{}
         \label{fig:subway_HCOAStar_rooms}
     \end{subfigure}
     \begin{subfigure}[b]{0.49\linewidth}
         \centering
         \includegraphics[width=\textwidth]{./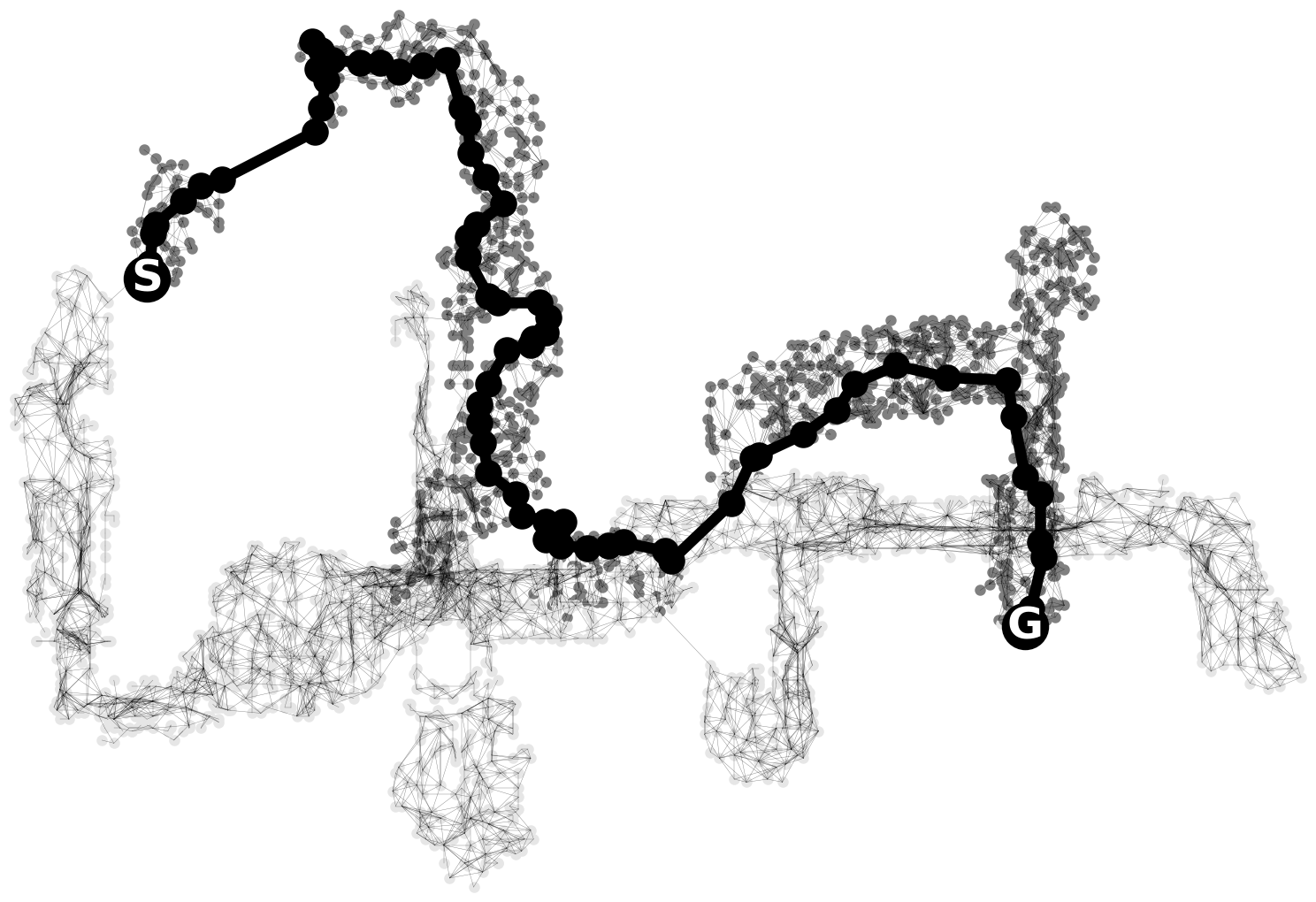}
         \caption{}
         \label{fig:subway_HCOAStar_places}
     \end{subfigure}

     \caption{(a) uHumans2 subway scene; (b) \textsf{Place} subgraph of the uHumans2 subway (88m×60m) with its semantic classes (green, blue, red); (b-d) The starting node is denoted by \textbf{S}, and the goal node by \textbf{G}. The path of each algorithm is shown in black. Expanded nodes are depicted in dark gray, while unexpanded nodes in light gray: (b) Path of COA* on the \textsf{place} subgraph; (c) Path of HCOA* on the \textsf{room} subgraph; (d) Path of HCOA* on the \textsf{place} subgraph.} 
     \label{fig:subway}
\end{figure}

We performed similar simulations to those in Section~\ref{sec:path_planning_office} on the two-floor subway 3DSG, which consists of 2,732 \textsf{place} nodes, shown in Figure~\ref{fig:subway}(\subref{fig:subway_3dsg}). 
The safety constraints are prioritized in descending order: (i) avoid room R(5), where stair repairs are in progress, and (ii) avoid room R(0), which is crowded.
Accordingly, we assign semantic classes as follows: $\phi_V^0(v) = 3$ for all $v \in \text{R(5)}$, $\phi_V^0(v) = 2$ for all $v \in \text{R(0)}$, and $\phi_V^0(v) = 1$ for all other nodes.

To evaluate the computational efficiency of the algorithms, we performed 1,000 runs with fixed start and goal locations. 
The starting node $v_s$ is located in room R(10), and the goal node $v_g$ is in room R(4). 
Figure~\ref{fig:subway}(\subref{fig:subway_places}) shows the \textsf{place} layer, while Figures~\ref{fig:office}(\subref{fig:office_COAStar_places})-(\subref{fig:office_HCOAStar_places}) depict the computed paths of COA* and HCOA*, along with the corresponding expanded nodes.

Table~\ref{tab: results_subway} summarizes the results. 
HCOA* achieves a 59\% reduction in the number of expanded nodes compared to COA*. 
In terms of computational time, HCOA*-MC and HCOA*-kNN yield reductions of 49\% and 15\%, respectively.

We also assessed the suboptimality of the proposed algorithm by running 500 scenarios with varied start and goal locations. 
HCOA* successfully finds the optimal path in 70.56\% of the cases. 
This decrease compared to the office environment is attributed to the increased complexity introduced by the two-floor layout and the irregular geometry of the rooms.
For comparison, the results of MA* show that with $\alpha = 2$, the performance is 72.42\%, while $\alpha = 10$ yields 99.16\%. 
For larger values, accuracy approaches 100\%, but degrades for $\alpha > 10^5$ 
again indicating numerical instability.

\begin{table}[!t]
\caption{Path-Planning on uHumans2 Subway Scene.}
\label{tab: results_subway}
\centering
\small
\begin{tabular}{ |c|c|c|c| }
    \hline
    Algorithm & Exp. Nodes & Time ($10^{-3}$s) & Opt. Path \\
    \hline
    HCOA*-MC & \textbf{1029} & \textbf{9.3} $\boldsymbol{\pm}$ \textbf{4.5} & \checkmark \\
    \hline
    HCOA*-kNN & \textbf{1029} & 15.6 $\pm$ 5.1 & \checkmark \\
    \hline
    HCOA*-GNN & \textbf{1029} & 36.6 $\pm$ 4.4 & \checkmark \\
    \hline
    COA* & 2480 & 18.3 $\pm$ 4.5 & \checkmark \\
    \hline
\end{tabular}
\end{table}

\section{Conclusion}
In this paper, we have addressed the problem of robot navigation in 3D geometric/semantic environments by leveraging the environment's hierarchy for efficient path-planning. 
We have introduced Hierarchical Class-ordered A* (HCOA*), an algorithm that exploits a total order over semantic classes to guide the search process while significantly reducing computational effort. 
To classify higher-layer nodes, we proposed three methods:  a Majority-Class method, a k-Nearest Neighbors method, and a Graph Neural Network method. 
Through simulations on two 3D Scene Graphs, we showed that HCOA* effectively reduces computational cost compared to other state-of-the-art methods. 
Specifically, our results show that HCOA* reduces the computational time of navigation by up to 50\%, while maintaining near-optimal performance.

Future work will address the limitations of using a total order over semantic classes for navigation. 
While total ordering offers a simple and interpretable mechanism for prioritizing semantic categories, it is insufficient for capturing more complex or temporally dependent constraints (e.g., avoid kitchen after visiting bathroom). 
To handle such cases, we plan to incorporate temporal logic~\cite{Kress-Gazit2009_LTL} and explore partial ordering schemes.
These enhancements will also necessitate extending our current path-planning framework to support online replanning capabilities. 
To this end, we aim to develop an online path-planning framework based on HCOA*, enabling adaptive decision-making in real-time, dynamic scenarios.

\bibliographystyle{IEEEtran}
\bibliography{IEEEabrv,refs}

\end{document}